\newtheorem{theorem}{Theorem}
\newtheorem{lemma}{Lemma}
\newtheorem*{proposition*}{Proposition}
\newtheorem{definition}{Definition}
\title{Decomposing Interventional Causality into Synergistic, Redundant, and Unique Components}
\author{Abel Jansma\\
    Dutch Institute for Emergent Phenomena, the Netherlands \\
    Institute for Logic, Language and Computation, University of Amsterdam\\
    Institute of Physics, University of Amsterdam\\
    \texttt{a.a.a.jansma@uva.nl}\\
}
\begin{document}
\maketitle
\begin{abstract}
    We introduce a novel framework for decomposing interventional causal effects into synergistic, redundant, and unique components, building on the intuition of Partial Information Decomposition (PID) and the principle of Möbius inversion. While recent work has explored a similar decomposition of an observational measure, we argue that a proper causal decomposition must be interventional in nature. We develop a mathematical approach that systematically quantifies how causal power is distributed among variables in a system, using a recently derived closed-form expression for the Möbius function of the redundancy lattice. The formalism is then illustrated by decomposing the causal power in logic gates, cellular automata, chemical reaction networks, and a transformer language model. Our results reveal how the distribution of causal power can be context- and parameter-dependent. The decomposition provides new insights into complex systems by revealing how causal influences are shared and combined among multiple variables, with potential applications ranging from attribution of responsibility in legal or AI systems, to the analysis of biological networks or climate models.
\end{abstract}

\section{Introduction}\label{sec:intro}
Causal language is ubiquitous throughout the natural and social sciences. A ball falls towards the earth \textit{because} of a gravitational force, inflation rises \textit{because} of excessive money supply, climate changes \textit{because} of greenhouse gas emissions. In each case, we seek not just to describe what happens, but to understand why it happens---to identify the causal mechanisms underlying the observed phenomenon. Implicit in causal language are claims about the effect of interventions or counterfactual scenarios (we can mitigate climate change by reducing greenhouse gas emissions, and inflation would not be as high if there had been no excessive money supply). When we study something that is embedded in a web of complex causal interactions, attributing causality can become more difficult, so sophisticated mathematical and computational techniques have been developed that all rely on one of two things: either one is able to directly intervene in the system and study its response, or one requires \textit{a priori} knowledge of the causal dependencies in the system. In this study, we focus on situations in which causal effects are identifiable, and address the question of causal attribution \cite{pearl2009causality,halpern2015cause}. Given that we know the effect of interventions on various variables, how do we attribute the causal power to the different variables? In particular, we are interested in distinguishing between three types of attribution: unique, redundant, or synergistic causal power. Decomposition into these three classes is commonly done in information theory, where it is known as the \textit{Partial Information Decomposition} (PID, \citep{williams2010nonnegative}). Given two coin flips, for example, the answer to the question `was there an even number of heads' is carried purely synergistically by the pair of coins, since each coin individually carries no information whatsoever about the answer. 

Here we extend this approach to causal quantities and define a `partial causality decomposition'. Disentangling these three different components of causality is crucial to a good understanding and control of complex systems. \citet{difrisco2020genetic}, for example, already emphasised the importance of identifying and distinguishing redundant and synergistic causality in gene regulatory systems. In machine learning it is common to obtain predictions from opaque models whose internal reasoning is not clear. Understanding and attributing causal power, blame, or responsibility to input features is crucial to a safe and reliable deployment of such models. In machine learning, this is commonly done using Shapley values \cite{shapley1953value,rozemberczki2022shapley}, which are closely related to the decomposition presented here \cite{jansma2025mereological}, but fail to disentangle unique, redundant, and synergistic contributions.

Redundant and synergistic causality are related to the established notions of causal sufficiency and necessity \cite{pearl2009causality,halpern2005causes,halpern2015cause}. If a set of variables carries redundant causal power, then any of the variables suffice to affect the outcome, whereas exerting the synergistic causal power of a set of variables necessitates a joint intervention. However, sufficiency and necessity are usually defined with respect to individual outcomes (rung 3 of Pearl's ladder of causation), whereas in this study we decompose an average causal effect (namely, the maximum average causal effect in Equation \eqref{eq:MACE}, which inhabits rung 2). A brief example of how one could apply the framework to study sufficient and necessary causes is included in Appendix \ref{app:nec_and_suff}.

Recently, \citet{martinez2024decomposing} demonstrated a technique to decompose an observational measure (based on mutual information) that they refer to as causal. We, however, are of the opinion that a measure of causality should necessarily be interventional, and not accessible from the joint distribution alone \citep{pearl2009causality}. Still, \citet{martinez2024decomposing} raise an interesting question, namely, is it possible to disentangle the unique, redundant, and synergistic causal power of a set of variables? We will show that this is indeed possible, drawing inspiration from the Partial Information Decomposition \cite{williams2010nonnegative} and the use of Möbius inversions in complex systems \citep{jansma2025mereological}.

\section{Background}
\subsection{Partial orders and Möbius inversion}
To show the algebraic structure of the decomposition, we first need to introduce a number of concepts. First are partially ordered sets, or \textit{posets}:
\begin{definition}[Partially Ordered Set]
    A partially ordered set is a tuple $(S, \leq)$, where $S$ is a set and $\leq$ is a binary relation on $S$ that is reflexive, antisymmetric, and transitive.
\end{definition}
When the ordering is clear from context, we sometimes use the shorthand $S$ to denote the poset. An example of a partial order is $(\mathcal{P}(T), \subseteq)$, the power set of $T$ ordered by inclusion. Given a poset $S$ and $a, b \in S$, the interval $[a, b]$ is the set $\{x: a \leq x \leq b\}$. If all intervals on $S$ are finite sets, then $S$ is called locally finite. We also define the following:
\begin{definition}[(Anti)chains]
    Let $(S, \leq)$ be a poset. A subset $T\subseteq S$ is a chain if for all $a, b \in T$, $a \leq b$ or $b \leq a$. If for all $a, b \in T$, $a \leq b$ implies $a = b$, then $T$ is an antichain.
\end{definition}
Note that in particular any single element subset of a poset is both a chain and an antichain. 

Functions on $S$ can interact with the partial order in various ways. One such function is the Möbius function:
\begin{definition}[Möbius function]
    Let $(S, \leq)$ be a locally finite poset. Then the Möbius function $\mu_S: S \times S \to \mathbb{Z}$ is defined as
    \begin{align}
        \mu_S(x, y) =
        \begin{cases}
            1 & \text{if } x=y\\
            -\sum\limits_{z: x\leq z < y}\mu_S(x, z) & \text{if } x < y  \\
            0 & \text{otherwise} \label{eq:MF}
        \end{cases}       
    \end{align}
\end{definition}
This allows us to state the Möbius inversion theorem (MIT):
\begin{theorem}[Möbius inversion theorem, \citet{Rota1964}]
    Let $(S, \leq)$ be a locally finite poset. Let $f, g: S \rightarrow \mathbb{R}$ be functions on $S$, and let $\mu_S$ be the Möbius function on $S$. Then
    \begin{align}
        f(b) &= \sum_{a \leq b} g(a) \quad \iff \quad g(b) = \sum_{a \leq b} \mu_S(a, b) f(a) \label{eq:MIT}
    \end{align}
\end{theorem}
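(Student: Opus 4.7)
The plan is to prove both directions by substitution and reindexing, reducing the statement to the pair of orthogonality identities $\sum_{x \leq z \leq y} \mu_S(x, z) = \delta_{x,y}$ and $\sum_{x \leq z \leq y} \mu_S(z, y) = \delta_{x,y}$, valid for every $x \leq y$ in $S$. The first is essentially a restatement of the defining recursion of $\mu_S$: for $x = y$ both sides equal $1$, and for $x < y$ isolating the $z = y$ term gives $\mu_S(x,y) + \sum_{x \leq z < y} \mu_S(x,z) = 0$, which is exactly \eqref{eq:MF}. So this identity is free.

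With the first identity in hand, the $(\Leftarrow)$ direction is a one-line calculation. Assuming $g(b) = \sum_{a \leq b} \mu_S(a,b) f(a)$, I substitute into $\sum_{a \leq b} g(a)$ and swap the order of summation:
\[
\sum_{a \leq b} g(a) \;=\; \sum_{a \leq b} \sum_{c \leq a} \mu_S(c,a)\, f(c) \;=\; \sum_{c \leq b} f(c) \sum_{c \leq a \leq b} \mu_S(c,a) \;=\; \sum_{c \leq b} f(c)\, \delta_{c,b} \;=\; f(b).
\]

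The $(\Rightarrow)$ direction is formally symmetric: substituting $f(b) = \sum_{a \leq b} g(a)$ into $\sum_{a \leq b} \mu_S(a,b) f(a)$ and reindexing produces $\sum_{c \leq b} g(c) \sum_{c \leq a \leq b} \mu_S(a, b)$, which equals $g(b)$ exactly when the second orthogonality identity holds. I expect this dual identity to be the main obstacle, since the recursion \eqref{eq:MF} is asymmetric in its two arguments: it fixes the first slot and unfolds the second.

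To remove the obstacle I would invoke local finiteness. On any interval $[x,y]$, extend $\leq$ to a total order and view the restrictions of $\zeta_S(a,b) := [a \leq b]$ and $\mu_S$ as finite upper-triangular matrices with unit diagonal, hence invertible. The first orthogonality identity is precisely the matrix equation $\mu_S \zeta_S = I$ on $[x,y] \times [x,y]$. Since a left inverse in a finite-dimensional matrix algebra is automatically a two-sided inverse, $\zeta_S \mu_S = I$ also holds, and this is the dual identity. A more elementary alternative would be induction on $|[x,y]|$, verifying that the function $z \mapsto \mu_S(z,y)$ and the function defined by the analogous recursion with the second argument fixed coincide, which reduces again to the invertibility of a unit-diagonal triangular system. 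Either route closes the proof.
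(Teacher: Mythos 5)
The paper gives no proof of this statement: it is quoted as a classical result with a citation to Rota (1964), so there is nothing internal to compare your argument against. Judged on its own, your proof is correct and is the standard one. You correctly identify that the only real content is the pair of orthogonality identities $\sum_{x\leq z\leq y}\mu_S(x,z)=\delta_{x,y}$ and $\sum_{x\leq z\leq y}\mu_S(z,y)=\delta_{x,y}$; the first is a direct restatement of the recursion \eqref{eq:MF}, and each direction of the theorem then follows by substitution and interchange of the (finite, by local finiteness) double sum. You are also right that the second identity is the only non-trivial step, since the recursion fixes the first argument, and your resolution --- viewing $\zeta_S$ and $\mu_S$ as unit-diagonal triangular matrices on the finite interval $[x,y]$ (under a linear extension of $\leq$) and using that a one-sided inverse of a finite square matrix is two-sided --- is sound; the only detail worth making explicit is that the Möbius function of the induced subposet $[x,y]$ agrees with the restriction of $\mu_S$, which holds because the recursion for $\mu_S(u,v)$ only involves elements of $[u,v]\subseteq[x,y]$. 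Either that route or the induction on $|[x,y]|$ you sketch closes the argument; no gaps.
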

The Möbius inversion theorem states that sums of a function over a partial order can be inverted using the Möbius function. Because the l.h.s. of \eqref{eq:MIT} can be interpreted as a discrete integral over the poset, the convolution with the Möbius function on the r.h.s. is often considered as a generalised discrete derivative. Indeed, applying the MIT to the natural numbers with their usual ordering recovers a discrete version of the fundamental theorem of calculus. As many quantities in complex systems correspond to integrals or derivatives over partial orders, the Möbius inversion theorem is a powerful tool in the analysis of such systems \citep{jansma2025mereological}. 

\subsection{Decompositions into antichains}
Principled decomposition into synergistic, redundant, and unique components was first explored by \citet{williams2010nonnegative} in the context of information theory under the name \textit{Partial Information Decomposition} (PID). Their approach forms our main inspiration, so we briefly describe it here. Consider the mutual information $I(X_1, X_2 ; Y)$ that two variables $(X_1, X_2)$ carry about a variable $Y$. If $I(X_1, X_2 ; Y)=v$, then one might ask: \textit{How} are the $v$ bits about $Y$ carried by the pair $(X_1, X_2)$? One could imagine that both variables carry the same $v$ bits, so that the information is carried redundantly, or that both variables uniquely carry $\frac{v}{2}$. Another option is that neither variable has information by itself, but it is the joint state of the pair that carries the $v$ bits synergistically. For two variables, one therefore writes
\begin{align}
    \nonumber I(X_1, X_2 ; Y)  = I_\partial(\{X_1\}; Y) + I_\partial(\{X_2\}; Y) \\
     + I_\partial(\{X_1, X_2\}; Y) + I_\partial(\{\{X_1\}, \{X_2\}\}; Y) \label{eq:PID}
\end{align}
where the `partial' information $I_\partial(\{X_i\};Y)$ is the information that variable $X_i$ \textit{uniquely} carries about $Y$, $I_\partial(\{\{X_1\}, \{X_2\}\};Y)$ is the information that is carried \textit{redundantly} by the two variables, and $I_\partial(\{X_1, X_2\};Y)$ is the information that is carried \textit{synergistically} by the joint state of the pair. Note, however, that the situation becomes more complex if more variables are added. Three variables $\{X_1, X_2, X_3\}$, for example, can carry information redundantly between the joint state of $\{X_1, X_2\}$ and the state of $\{X_3\}$, or synergistically between all three variables, etc. The central insight of the PID is that information can be carried redundantly among all incomparable subsets of variables. Given a set of variables $S$, the incomparable subsets of $S$ are the \textit{antichains}
of $(\mathcal{P}(S), \subseteq)$, denoted $\mathcal{A}(S)$. Some elements of $\mathcal{A}(\{X_1, X_2, X_3\})$ and their interpretation are:
{\small\begin{itemize}
    \item $\{\{X_1, X_2, X_3\}\} \to $ synergy among $X_1, X_2, X_3$
    \item $\{\{X_1, X_2\}, \{X_2, X_3\}\} \to $ redundancy between $\{X_1, X_2\}$ and $\{X_2, X_3\}$
    \item $\{\{X_1\}, \{X_2\}, \{X_3\}\} \to $ redundancy between $X_1, X_2, X_3$
\end{itemize}}
We sometimes use simplified notation of the type $\{\{X_1, X_2\}, \{X_2, X_3\}\}=X_1X_2|X_2X_3$. The set $\{\{X_1, X_2\}, \{X_1, X_2, X_3\}\}$ is \textit{not} an antichain, because $\{X_1, X_2\} \subseteq \{X_1, X_2, X_3\}$. Terms like this should be excluded because the redundancy between $\{X_1, X_2\}$ and $\{X_1, X_2, X_3\}$ is simply the contribution of $\{\{X_1, X_2\}\}$. We will usually denote sets with uppercase Latin letters, and antichains with lowercase Greek letters.

A nice property of antichains is that they can be ordered with respect to redundancy by setting $\alpha \leq \beta$ when it is at least as `easy' to access the information in $\alpha$ as that in $\beta$. For example, we set $\{\{X_1, X_2\}, \{X_3\}\} \leq \{\{X_1, X_2\}, \{X_3, X_4\}\}$, because you can access the information in $\{\{X_1, X_2\}, \{X_3\}\}$ by observing the pair $(X_1, X_2)$ or the single variable $X_3$, whereas accessing the information in $\{\{X_1, X_2\}, \{X_3, X_4\}\}$ necessarily requires observing a pair. This ordering can be formally defined as follows:
\begin{align}
    \alpha \leq \beta \quad \iff \quad \forall B \in \beta, \exists A \in \alpha: A \subseteq B \label{eq:antichain_order}
\end{align}
The redundancy ordering is shown for $n=2, 3, 4$ in Figure \ref{fig:Hasse_redundancy}, and allows us to write the decomposition into antichains as
\begin{align}
    I(X; Y) = \sum_{\substack{\alpha \in \mathcal{A}(X): \alpha \leq X} } I_\partial(\alpha; Y) \label{eq:PID_decomp}
\end{align}
which can then be inverted using the MIT to find the contribution of each of the types of information:
\begin{align}
    I_\partial(\beta; Y) = \sum_{\alpha \leq \beta} \mu_{\mathcal{A}(X)}(\alpha, \beta) I_\cap(\alpha; Y) \label{eq:PID_decomp_inv}
\end{align}

On the r.h.s. of Equation \eqref{eq:PID_decomp_inv}, we added a subscript to the mutual information $I_\cap(\alpha; Y)$, since it is now evaluated both on sets of variables, but also on other antichains. On a set $S$ of variables, $I_\cap(S; Y) = I(S, Y)$, but on a general antichain $\alpha$, $I_\cap(\alpha; Y)$ is the information that the variables in $\alpha$ share about $Y$ ($\cap$ referring to `sharedness'). There are many ways to define this, and a significant portion of the PID literature has focused on exploring different approaches to fix the definition of $I_\cap$ (see e.g. \citep{bertschinger2012shared,griffith2014intersection,bertschinger2014quantifying,harder2013bivariate,ince2017measuring,goodwell2017temporal}). For example, Williams and Beer \cite{williams2010nonnegative} introduced the PID with the measure \(I_{min}(\alpha;Y) = \sum_y p(y) \min_{A\in\alpha}I(Y=y;A)\), that quantifies the expected “specific” information that any source \(A \in \alpha\) provides about outcomes of \(Y\). Since this measure was found to behave unintuitively on some distributions, a common alternative is the minimum mutual information (MMI) \(I_{mmi}(\alpha;Y) = \text{min}_{A\in\alpha}I(Y,A)\) \cite{barrett2015exploration}.

A second complication in performing an antichain decomposition is that the number of ways information can be carried among $n$ variables grows superexponentially with $n$ (namely, as the $n$th Dedekind number). This makes recursively solving the system of equations in \eqref{eq:PID_decomp}, or recursively calculating the Möbius function in \eqref{eq:PID_decomp_inv}, computationally very expensive. This has limited the PID approach mostly to systems with $n\leq 3$. Recently, however, a closed-form expression for the Möbius function on $\mathcal{A}(S)$ for any $S$ was derived, which can offer a double-exponential speedup and opens up the possibility to study larger systems \citep{jansma2024fast}. However, to avoid the complexity of the computation obscuring the simplicity of the method, we chose here to focus on decomposing quantities of up to three variables, for which such speedups are not necessary.

\begin{figure*}
    \begin{minipage}[t!]{0.2\textwidth}
        \begin{flushleft}
            \begin{tikzpicture}[scale=0.7]
    \matrix (m) [matrix of math nodes, row sep=2em, column sep=0.2em] {
      & \{0,1\} & \\
      \{0\} & & \{1\} \\
      & \{0\}\{1\} & \\
    };
    \path[-]
      (m-1-2) edge (m-2-1)
              edge (m-2-3)
      (m-2-1) edge (m-3-2)
      (m-2-3) edge (m-3-2);
\end{tikzpicture}
  
        \end{flushleft}
    \end{minipage}
    \begin{minipage}[t!]{0.39\textwidth}
        \begin{flushleft}
        \scalebox{0.6}{
        \begin{tikzpicture}[scale=0.6]
    \matrix (m) [matrix of math nodes, row sep=2em, column sep=0.5em] {
      & \{0,1,2\} & &\\
      \{01\} & \{02\} & \{12\} & \\
      \{01\}\{02\} & \{01\}\{12\} & \{02\}\{12\}\\
      \{0\} & \{1\} & \{2\} & \{01\}\{02\}\{12\}\\
      \{0\}\{12\} & \{2\}\{02\} & \{2\}\{01\}\\
      \{0\}\{1\} & \{0\}\{2\} & \{1\}\{2\}\\
      & \{0\}\{1\}\{2\} & &\\
    };
    \path[-]
      (m-1-2) edge (m-2-1)
              edge (m-2-2)
              edge (m-2-3)
      (m-2-1) edge (m-3-1)
              edge (m-3-2)
      (m-2-2) edge (m-3-1)
              edge (m-3-3) 
      (m-2-3) edge (m-3-2)
              edge (m-3-3) 
      (m-3-1) edge (m-4-1)
              edge (m-4-4)
      (m-3-2) edge (m-4-2)
              edge (m-4-4)
      (m-3-3) edge (m-4-3)
              edge (m-4-4)
      (m-4-4) edge (m-5-1)
              edge (m-5-2)
              edge (m-5-3)
      (m-4-1) edge (m-5-1)
      (m-4-2) edge (m-5-2)
      (m-4-3) edge (m-5-3)
      (m-5-1) edge (m-6-1)
              edge (m-6-2)
      (m-5-2) edge (m-6-1)
              edge (m-6-3)
      (m-5-3) edge (m-6-2)
              edge (m-6-3)
      (m-6-1) edge (m-7-2)
      (m-6-2) edge (m-7-2)
      (m-6-3) edge (m-7-2);
\end{tikzpicture}
  }
        \end{flushleft}
    \end{minipage}
    \begin{minipage}[t!]{0.39\textwidth}
        \begin{flushleft}
            \includegraphics[width=\textwidth]{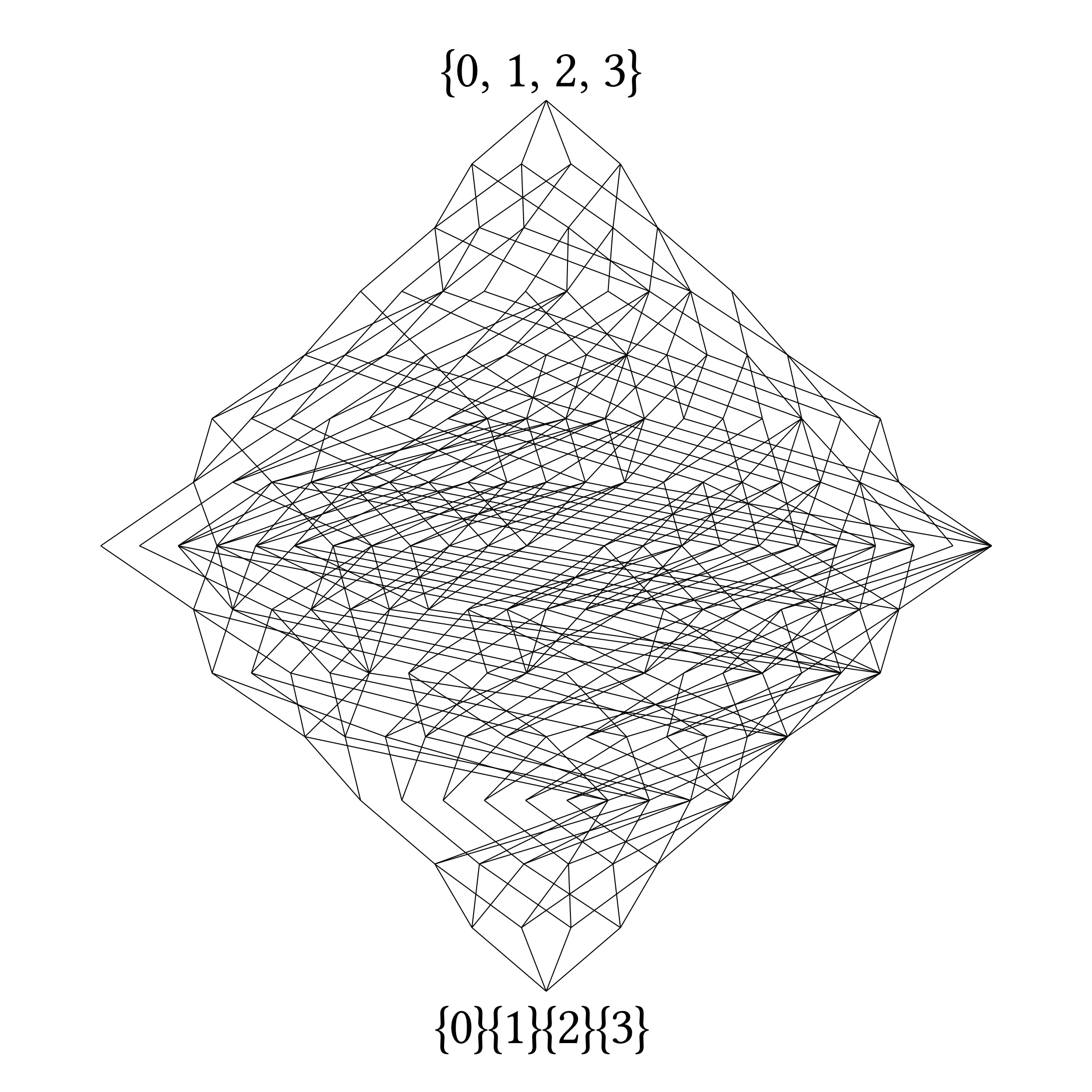}
        \end{flushleft}
    \end{minipage}
    \caption{The transitive reduction (Hasse diagram) of the redundancy ordering from Equation \eqref{eq:antichain_order} for the $n=2, 3, 4$ variables. The posets contain respectively 4, 18, and 166 elements.}
    \label{fig:Hasse_redundancy}
\end{figure*}

\section{Interventional causality}
\subsection{Average causal effects}
A central object of study in causality is the average treatment effect \citep{pearl2009causality}. Given a set of variables $X_S \coloneq \{X_i | i \in S\}$, where $S$ is some indexing set, the average treatment effect (ATE) of $X_S$ on an outcome $Y$ is defined as
{\small{\begin{align}
    \text{ATE}(X_S;Y) = \mathbb{E}[Y|do(X_S=x_1)] - \mathbb{E}[Y|do(X_S=x_0)] \label{eq:ATE}
\end{align}}}
where the do-operator denotes an intervention, and $x_1$ and $x_0$ denote the `treated' and `untreated' state, respectively. The most direct way to estimate quantities like \eqref{eq:ATE} is to perform a randomised controlled trial, where the treatment $X$ is randomly assigned to the subjects. In that case, $P(Y|do(X=x)) = P(Y|X=x)$, and the ATE can be estimated from the joint distribution of $X$ and $Y$. However, randomised controlled trials are expensive, and one commonly has to rely on observational data. In that case, the ATE is not directly observable, because $P(Y|do(X=x)) \neq P(Y|X=x)$. In fact, probabilities under interventions are \textit{by definition} not derivable from the joint distribution only. One always needs additional information about the causal structure, usually given in the form of a directed acyclic graph (DAG) called the causal graph. To estimate the effect of interventions, \citet{pearl2009causality} developed rules that relate interventional quantities on the causal graph to observational quantities relative to a transformed graph, collectively referred to as the do-calculus. By combining knowledge of the joint distribution and the causal graph, one can estimate the effect of interventions, or conclude that an effect is not identifiable. It is these true causal effects, defined in terms of interventions, that our method decomposes.

Since we aim to capture the total causal power in a set of variables, not just the effect of a binary treatment, we slightly modify the ATE to a `maximal average causal effect' by defining:
\begin{align}
    \text{MACE}(X_S;Y) = 
    \max_{x, x' \in \mathcal{X}_S}\left(\mathbb{E}[Y|do(X_S=x)] - \mathbb{E}[Y|do(X_S=x')] \right)\label{eq:MACE}
\end{align}
where $\mathcal{X}_S$ is the set of possible values that $X_S$ can take. Note, however, that this is just one way to characterise causal strength. For instance, one could instead maximise the difference $\mathbb{E}[Y|do(X_S=x)] - \mathbb{E}[Y]$ with the observational state. Many definitions are possible (see e.g. \citep{janzing2013quantifying} or the alternative explored in Appendix \ref{app:nec_and_suff}), each of which can be similarly decomposed using our method. Further note that the MACE is not a measure of the causal effect of $X_S$ on $Y$, but rather a measure of the maximal causal power that $X_S$ can exert on $Y$. This is an important distinction, as it identifies which variables have causal power, but does not indicate which interventions will lead to which outcomes. 

The MACE quantifies the maximal causal power that a set of variables $X_S$ can exert on $Y$. This definition has the advantage that it does not rely on \textit{a priori} assumptions about the possible values of the variables, but it no longer captures the direction of the causal effect. It is this quantity that we wish to decompose into synergistic, redundant, and unique components. This makes sense, since while the MACE quantifies the total causal power of a set of variables, it does not tell us \textit{how} this power is distributed among the variables. To understand and steer the behaviour of causal systems, it can be crucial to know if the causal power is redundant, synergistic, or unique.

\subsection{Decomposing the causal effects into antichains}
In order to decompose the MACE over the antichains, it is necessary to extend its domain to all antichains. For antichains of cardinality one, the definition from Equation \eqref{eq:MACE} can be used. For a general antichain $\alpha$, we define this `redundant' MACE, denoted as $\text{MACE}_{\cap}(\alpha;Y)$, as the minimum of the MACEs of elements of the antichain, since this is the causal power that they share:
\begin{align}
    \text{MACE}_\cap(\alpha; Y) = \min_{A \in \alpha} \text{MACE}(X_A;Y)\\
    = \min_{A \in \alpha} \max_{x, x' \in \mathcal{X}_A}\left(\mathbb{E}[Y|do(X_A=x)] - \mathbb{E}[Y|do(X_A=x')] \right) \label{eq:redundantMACE}
\end{align}
Note, however, that one could come up with different definitions of redundant causality, as long as it reduces to the chosen definition of the causal effects on antichains of cardinality one. With this definition, we can decompose the MACE of a set of variables $X_T \subseteq X_S$ over the lattice of antichains as
\begin{align}
    \text{MACE}(X_T;Y) = \text{MACE}_\cap(\{X_T\};Y) = \sum_{\alpha \leq \{X_T\}} C(\alpha;Y) 
\end{align}
where $C(\alpha;Y)$ denotes the `partial causal effect' of antichain $\alpha$ to the full MACE. This corresponds to a `partial causality decomposition'. A Möbius inversion then shows that the partial causal effects can be written as
\begin{align}
    C(\beta;Y) = \sum_{\alpha \leq \beta} \mu_{\mathcal{A}(X_S)}(\alpha, \beta) \text{MACE}_\cap(\alpha;Y) 
\end{align}
where $\mu_{\mathcal{A}(X_S)}$ is the Möbius function of the antichain poset $(\mathcal{A}(X_S), \leq)$ which can be calculated using the fast Möbius transform \cite{jansma2024fast}. One nice property of the $\text{MACE}_\cap$ is that it is a monotone function on the antichain poset:
\begin{lemma}\label{lem:monotonicity}
    The function $\text{\emph{MACE}}_\cap: \mathcal{A}_n \to \mathbb{R}$ is monotonic with respect to the redundancy ordering on $\mathcal{A}_n$. 
\end{lemma}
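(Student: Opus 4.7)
The plan is to reduce the lemma to a set-inclusion monotonicity for the ordinary MACE, namely the auxiliary claim that $A\subseteq B$ implies $\text{MACE}(X_A;Y)\leq\text{MACE}(X_B;Y)$. Granting that auxiliary statement, the lemma itself is a one-line consequence of the definition of the redundancy order. Suppose $\alpha\leq\beta$ and let $B^{*}\in\beta$ realise the minimum in $\text{MACE}_\cap(\beta;Y)=\min_{B\in\beta}\text{MACE}(X_B;Y)$; by \eqref{eq:antichain_order} some $A^{*}\in\alpha$ satisfies $A^{*}\subseteq B^{*}$, and the auxiliary claim yields $\text{MACE}(X_{A^{*}};Y)\leq\text{MACE}(X_{B^{*}};Y)$. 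Since $\text{MACE}_\cap(\alpha;Y)\leq\text{MACE}(X_{A^{*}};Y)$ directly from the definition of $\text{MACE}_\cap$, we conclude $\text{MACE}_\cap(\alpha;Y)\leq\text{MACE}_\cap(\beta;Y)$.

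For the auxiliary claim, the strategy is to establish the convex-combination identity
\begin{align*}
\mathbb{E}[Y\mid do(X_A=x_A)] = \sum_{x_{B\setminus A}} P\bigl(X_{B\setminus A}=x_{B\setminus A}\mid do(X_A=x_A)\bigr)\,\mathbb{E}[Y\mid do(X_B=(x_A,x_{B\setminus A}))],
\end{align*}
after which the inequality is immediate: every value of the left-hand side (as $x_A$ varies over $\mathcal{X}_A$) lies in the convex hull of the right-hand-side expectations, so the range of $\mathbb{E}[Y\mid do(X_A=\cdot)]$ over $\mathcal{X}_A$ is contained in the range of $\mathbb{E}[Y\mid do(X_B=\cdot)]$ over $\mathcal{X}_B$, which is exactly $\text{MACE}(X_A;Y)\leq\text{MACE}(X_B;Y)$. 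Under causal sufficiency (Markovianity) the identity follows from the truncated factorisation: splitting the g-formula product $\prod_{V_i\notin A} P(V_i\mid pa(V_i))$ into the factors for $V_i\in B\setminus A$ and those for $V_i\notin B$, and then marginalising the remaining variables, recovers exactly the product $P(X_{B\setminus A}\mid do(X_A))\cdot\mathbb{E}[Y\mid do(X_B)]$.

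The principal obstacle lies in the auxiliary claim in non-Markovian settings: with latent confounders the individual conditional $\mathbb{E}[Y\mid X_{B\setminus A},do(X_A)]$ typically differs from the interventional $\mathbb{E}[Y\mid do(X_A,X_{B\setminus A})]$, and the equality need not be restored upon averaging over $X_{B\setminus A}$. A clean treatment therefore seems to require either an explicit Markovianity / causal-sufficiency assumption or additional do-calculus arguments verifying that the discrepancies cancel; I would state such an assumption at the outset and then close the proof with the g-formula calculation sketched above.
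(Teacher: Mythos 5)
Your overall architecture is the same as the paper's: both proofs factor through the auxiliary claim that $A \subseteq B$ implies $\text{MACE}(X_A;Y)\leq\text{MACE}(X_B;Y)$ (the paper's Lemma~\ref{lem:monotonicity_PX}) and then lift it to the antichain order via the minimum. Your lifting step is correct, and in fact cleaner than the paper's: the paper argues by contradiction through an intermediate claim that is stated too strongly (that the reversed inequality forces $\nexists b\in\beta, a\in\alpha: a\subseteq b$, when only the minimising $b^*$ is relevant), whereas your direct chain $\text{MACE}_\cap(\alpha;Y)\leq\text{MACE}(X_{A^*};Y)\leq\text{MACE}(X_{B^*};Y)=\text{MACE}_\cap(\beta;Y)$ is exactly right.

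The genuine gap is in the auxiliary claim, and you have located the difficulty more honestly than the paper does (its one-line proof of Lemma~\ref{lem:monotonicity_PX} reads as a statement about maxima over nested index sets and never confronts the fact that $do(X_A=x)$ is not itself an intervention on $X_B$). However, your proposed repair is not sufficient: Markovianity does \emph{not} deliver the convex-combination identity. In the truncated factorisation, the factors $P(v_i\mid pa(v_i))$ for $V_i\in B\setminus A$ can share parents with factors upstream of $Y$, so the product does not split into $P(X_{B\setminus A}\mid do(X_A))\cdot P(Y\mid do(X_B))$. Concretely, take the fully observed Markovian model $W\sim\mathrm{Bern}(1/2)$, $X_2:=W$, $Y:=X_1\cdot\mathds{1}[X_2=W]$ with $X_1$ exogenous. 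Then $\mathbb{E}[Y\mid do(X_1=a)]=a$, so $\text{MACE}(X_1;Y)=1$, while $\mathbb{E}[Y\mid do(X_1=a,X_2=z)]=a/2$, so $\text{MACE}(\{X_1,X_2\};Y)=1/2$: the auxiliary claim, and hence the lemma applied to $\{\{X_1\}\}\leq\{\{X_1,X_2\}\}$, fails with no latent confounders present. What is actually needed is that conditioning on and intervening on $X_{B\setminus A}$ coincide after $do(X_A)$ --- e.g.\ no common causes, latent or observed, of $X_{B\setminus A}$ and $Y$; this holds in all of the paper's examples because there $Y$ is a function of the intervened variables and independent noise alone. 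You should state that condition, rather than causal sufficiency, as the hypothesis under which your g-formula calculation closes.
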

\begin{proof}
    See Appendix \ref{app:monotonicity}.
\end{proof}
This lemma serves to lend credibility to our claim that the definition of $\text{MACE}_\cap$ is sensible, but also implies the following:
\begin{theorem}\label{thm:nonnegative}
    The partial causal effects $C(\alpha;Y)$ are nonnegative.
\end{theorem}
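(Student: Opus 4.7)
The plan is to prove nonnegativity by strong induction on the antichain lattice $(\mathcal{A}(X_S), \leq)$, leveraging Lemma~\ref{lem:monotonicity} together with the pointwise-minimum structure of $\text{MACE}_\cap$. At the bottom element $\bot = \{\{X_1\},\ldots,\{X_n\}\}$, the Möbius sum collapses to $C(\bot; Y) = \text{MACE}_\cap(\bot; Y) = \min_i \text{MACE}(X_i; Y)$, which is nonnegative because every MACE is itself nonnegative: the inner maximum over $(x, x')$ in \eqref{eq:MACE} always admits the choice $x = x'$, yielding zero.

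For the inductive step at an antichain $\beta$, I would assume $C(\alpha; Y) \geq 0$ for every $\alpha < \beta$ and rewrite Möbius inversion in its recursive form,
\begin{equation*}
    C(\beta; Y) = \text{MACE}_\cap(\beta; Y) - \sum_{\alpha < \beta} C(\alpha; Y),
\end{equation*}
so the task reduces to bounding $\sum_{\alpha < \beta} C(\alpha; Y) \leq \text{MACE}_\cap(\beta; Y)$. The natural strategy is to locate an immediate predecessor $\gamma \lessdot \beta$ obtained by refining a minimiser $A^* \in \arg\min_{A \in \beta} \text{MACE}(X_A; Y)$, chosen so that every $\alpha < \beta$ satisfies $\alpha \leq \gamma$. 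The direct form of Möbius inversion at $\gamma$ then yields $\sum_{\alpha < \beta} C(\alpha; Y) = \text{MACE}_\cap(\gamma; Y)$, whence Lemma~\ref{lem:monotonicity} delivers $C(\beta; Y) = \text{MACE}_\cap(\beta; Y) - \text{MACE}_\cap(\gamma; Y) \geq 0$.

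The main obstacle is that a generic $\beta$ admits several incomparable covers, so no single $\gamma$ dominates the full open downset $[\bot, \beta)$. When this happens I would proceed by inclusion--exclusion over the cover set $\{\gamma_i\}$ using their meets $\gamma_i \wedge \gamma_j \in \mathcal{A}(X_S)$, reducing $\sum_{\alpha < \beta} C(\alpha; Y)$ to an alternating sum of $\text{MACE}_\cap$ values at meets. Showing that this alternating sum is bounded above by $\text{MACE}_\cap(\beta; Y)$ amounts to Choquet-style supermodularity of $\text{MACE}_\cap$ on the redundancy lattice, which I would try to derive from the monotonicity of the underlying set-MACE under inclusion (intervening on a superset cannot shrink the maximal causal effect) combined with the pointwise-min structure of $\text{MACE}_\cap$. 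Verifying that this Williams--Beer-style argument survives the interplay between the outer maximum over intervention pairs and the inner minimum over antichain elements is where I expect the real work to lie.
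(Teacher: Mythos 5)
Your overall strategy---induction up the redundancy lattice, reducing nonnegativity of $C(\beta;Y)$ to the inequality $\sum_{\alpha<\beta}C(\alpha;Y)\leq\text{MACE}_\cap(\beta;Y)$ and closing it with monotonicity---is essentially the argument the paper invokes: its proof simply substitutes Lemma~\ref{lem:monotonicity} into the proof of Theorem~5 of Williams and Beer, and that proof is exactly the induction you describe. Your base case is fine, and you correctly identify the one place where the argument is nontrivial: a generic $\beta$ has several incomparable lower covers $\gamma_1,\ldots,\gamma_k$, so the open downset $\{\alpha:\alpha<\beta\}$ is a union $\bigcup_i{\downarrow}\gamma_i$ rather than a single principal downset. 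However, at precisely that point you stop: you name the needed inequality (``Choquet-style supermodularity''), say you ``would try to derive'' it, and flag it as ``where the real work lies.'' Since that step \emph{is} the content of the theorem, the proof as written is incomplete.

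The gap does close, and more easily than you fear; the ``interplay between the outer maximum and the inner minimum'' is already fully absorbed by Lemmas~\ref{lem:monotonicity} and~\ref{lem:monotonicity_PX}, and what remains is pure lattice combinatorics of the min. Concretely: inclusion--exclusion over the union of downsets (valid for arbitrary signed summands, so you do not even need the induction hypothesis here) gives
\begin{equation*}
    \sum_{\alpha<\beta}C(\alpha;Y)\;=\;\sum_{\emptyset\neq I\subseteq\{1,\ldots,k\}}(-1)^{|I|+1}\,\text{MACE}_\cap\Bigl(\bigwedge_{i\in I}\gamma_i;Y\Bigr),
\end{equation*}
using ${\downarrow}\gamma_i\cap{\downarrow}\gamma_j={\downarrow}(\gamma_i\wedge\gamma_j)$ and the forward direction of M\"obius inversion on each principal downset. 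The meet $\bigwedge_{i\in I}\gamma_i$ is the antichain of $\subseteq$-minimal sets in $\bigcup_{i\in I}\gamma_i$, and by Lemma~\ref{lem:monotonicity_PX} discarding non-minimal sets cannot change the minimum, so $\text{MACE}_\cap(\bigwedge_{i\in I}\gamma_i;Y)=\min_{i\in I}\text{MACE}_\cap(\gamma_i;Y)$. The alternating sum of minima then collapses, by the standard max--min inclusion--exclusion identity, to $\max_i\text{MACE}_\cap(\gamma_i;Y)$, which is at most $\text{MACE}_\cap(\beta;Y)$ by Lemma~\ref{lem:monotonicity} since each $\gamma_i<\beta$. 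You should also record the small poset fact that every $\alpha<\beta$ lies below some lower cover of $\beta$ (take the penultimate element of a maximal chain from $\alpha$ to $\beta$), which justifies writing the open downset as that union in the first place. With these three observations supplied, your induction becomes a complete and self-contained proof; without them it is an accurate outline of the cited argument rather than a proof.
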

\begin{proof}
    Immediate by inserting Lemma \ref{lem:monotonicity} into the proof of Theorem 5 in \citep{williams2010nonnegative}.
\end{proof}
Nonnegativity of causal effects makes intuitive and practical sense. Consider the following:
\begin{align}
    \text{syn}(S;Y) = \sum_{\substack{\alpha \in \mathcal{A}(S)\\ \nexists A \in \alpha: |A|=1}} C(\alpha;Y) \label{eq:totalSynergy}
\end{align}
This quantity $\text{syn}(S;Y)$ is the total aggregated synergistic causal power within the set of variables $S$ on $Y$. It captures all causal power that requires coordination among multiple variables. Nonnegativity of the $C(\alpha;Y)$ implies that even when not all these terms in the sum can be calculated, a partial sum will still give a lower bound on $\text{syn}(S;Y)$.

\section{Decomposing causality in practice}
Code to reproduce the figures in this section is available at \cite{causalDecompositionGH}. 
\subsection{Causal power in logic gates}
The causal graph of a logic gate with two inputs is simply a collider structure: $X_1 \rightarrow Y \leftarrow X_2$. In this case, the do-operator reduces to the see-operator (by the back-door criterion $P(Y|do(X)) = P(Y|X)$ if $Y \perp\!\!\!\perp X$ in $G_{\underline{X}}$, the graph with all arrows out of $X$ removed), that is, $E(Y|do(X_1)) = E(Y|X_1)$, so we can just use the conditional expectation to calculate the MACE.

Let the two inputs be independently drawn from a binomial distribution with $P(X_1=1)=P(X_2=1)=p$, and the output be their logical OR, AND, XOR, or COPY, where $\text{COPY}(X_1, X_2) = X_1$. The MACE for each of these, as a function of $p$, can be easily calculated by hand. For example, 
\begin{align}
    \text{MACE}_\cap(\{1\};X_1 \text{ OR } X_2) &= \mathbb{E}[X_1 \text{ OR } X_2|do(X_1=1)] - \mathbb{E}[X_1 \text{ OR } X_2|do(X_1=0)] \\
    &= 1 - p \\
    \text{MACE}_\cap(\{1\};X_1 \text{ XOR } X_2) &= |1-2p|
\end{align}
The causal contributions $C(\alpha;Y)$ are then simply the Möbius inversion of these values, and shown in Figure \ref{fig:decomp}. This shows that the causal contributions indeed disentangle the unique, redundant, and synergistic interventional power of the variables. In the OR gate, at low $p$ the causal power is very redundant: changing either output to a 1 will probably change the output state. In contrast, at high $p$ the only way to affect the outcome is generally to set both variables to 0, which requires synergistic causal control. The inverse is true for the AND gate. Controlling the XOR of the inputs at $p=\frac{1}{2}$ requires full synergistic control, whereas for large or small $p$ the causal power is mostly redundant because flipping either to 1 at low $p$, or to 0 at high $p$, tends to activate the output. The unique causal power of an input vanishes in all gates that are symmetric under permutations of the inputs. In contrast, the causality in the COPY gate is completely built from the unique causal power of $X_1$.
\begin{figure}
    \centering
    \includegraphics[width=\textwidth]{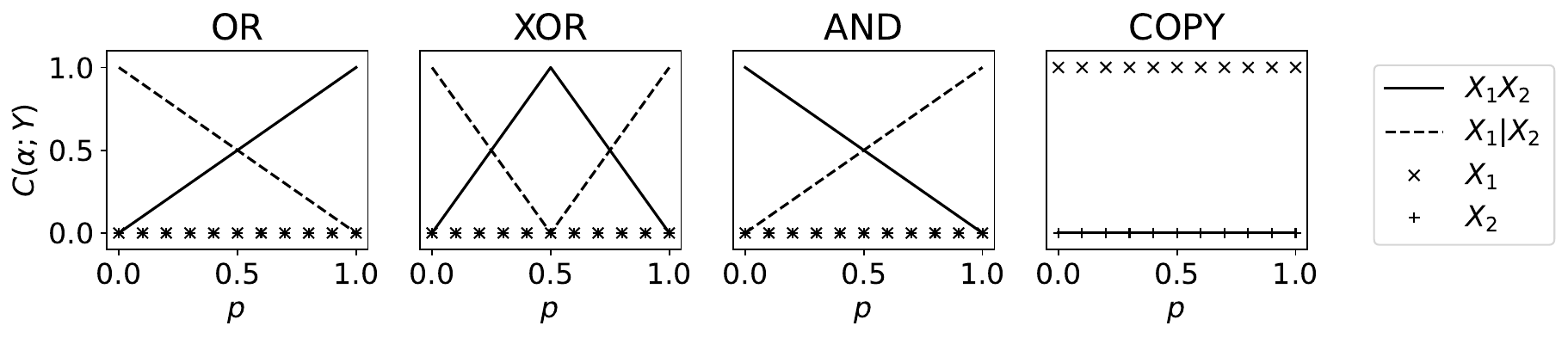}
    \caption{The causal contributions $C(\alpha;Y)$ of the antichains of logical gates.}
    \label{fig:decomp}
\end{figure}

\subsection{Causal power in cellular automata is context-dependent}
A more interesting causal structure is that of a cellular automaton. Consider a 1D cellular automaton with $N$ cells, where each cell can be 0 or 1. The state of a cell at time $t+1$ is determined by the state of itself and its two neighbours at time $t$. We consider the causal power a cell $B$ and its two neighbours at time $t$ have over $B$ at time $t+1$. The causal graph takes the following form:

\[\begin{tikzcd}[row sep=large, column sep=large]
    \ldots & \ldots & \ldots & \ldots & \ldots \\
    \ldots & {A_{t-1}} & {B_{t-1}} & {C_{t-1}} & \ldots \\
    & {A_t} & {B_t} & {C_t} \\
    && {B_{t+1}}
    \arrow[shorten <=3pt, shorten >=3pt, from=1-1, to=2-2]
    \arrow[shorten <=3pt, shorten >=3pt, from=1-2, to=2-1]
    \arrow[shorten <=3pt, shorten >=3pt, from=1-2, to=2-2]
    \arrow[shorten <=3pt, shorten >=3pt, from=1-2, to=2-3]
    \arrow[shorten <=3pt, shorten >=3pt, from=1-3, to=2-2]
    \arrow[shorten <=3pt, shorten >=3pt, from=1-3, to=2-3]
    \arrow[shorten <=3pt, shorten >=3pt, from=1-3, to=2-4]
    \arrow[shorten <=3pt, shorten >=3pt, from=1-4, to=2-3]
    \arrow[shorten <=3pt, shorten >=3pt, from=1-4, to=2-4]
    \arrow[shorten <=3pt, shorten >=3pt, from=1-4, to=2-5]
    \arrow[shorten <=3pt, shorten >=3pt, from=1-5, to=2-4]
    \arrow[shorten <=3pt, shorten >=3pt, from=2-1, to=3-2]
    \arrow[shorten <=3pt, shorten >=3pt, from=2-2, to=3-2]
    \arrow[shorten <=3pt, shorten >=3pt, from=2-2, to=3-3]
    \arrow[shorten <=3pt, shorten >=3pt, from=2-3, to=3-2]
    \arrow[shorten <=3pt, shorten >=3pt, from=2-3, to=3-3]
    \arrow[shorten <=3pt, shorten >=3pt, from=2-3, to=3-4]
    \arrow[shorten <=3pt, shorten >=3pt, from=2-4, to=3-3]
    \arrow[shorten <=3pt, shorten >=3pt, from=2-4, to=3-4]
    \arrow[shorten <=3pt, shorten >=3pt, from=2-5, to=3-4]
    \arrow[shorten <=3pt, shorten >=3pt, from=3-2, to=4-3]
    \arrow[shorten <=3pt, shorten >=3pt, from=3-3, to=4-3]
    \arrow[shorten <=3pt, shorten >=3pt, from=3-4, to=4-3]
\end{tikzcd}\]
The MACE of $A$ can be written as 
\begin{align}
    \nonumber \text{MACE}(A_t;B_{t+1}) 
    = &\max_{s, s'}\left(\mathbb{E}[B_{t+1}|do(A_t=s)] - \mathbb{E}[B_{t+1}|do(A_t=s')] \right)
\end{align}
which shows that we need access to quantities like $p(B_{t+1}|do(A_t=s))$. Note that there are a lot of backdoor paths from $A_t$ to $B_{t+1}$, but all are blocked by the set $\{B_t, C_t\}$, so we can write:
{\small{\begin{align}
     \nonumber p(B_{t+1}|do(A_t=a_t))
    = & \sum_{b_t, c_t} p(B_{t+1}|A=a_t, B_t=b_t, C_t=c_t) p(B_t=b_t, C_t=c_t)\\
     \nonumber p(B_{t+1}|do(A_t=a_t, C_t = c_t))
    = & \sum_{b_t} p(B_{t+1}|A=a_t, B_t=b_t, C_t=c_t) p(B_t=b_t)\\
     \nonumber p(B_{t+1}|do(A_t=a_t, B_t = b_t, C_t = c_t))
    = & p(B_{t+1}|A=a_t, B_t=b_t, C_t=c_t)
\end{align}}}

However, we do not have a clear prior probability distribution on $B_t$ and $C_t$, so there are multiple ways to evaluate these expressions. First, we consider the `maximum entropy' solution, which simply assumes that the input states are drawn from a uniform distribution. For a single intervention, this implies:
{\small{\begin{align}
    p(B_{t+1}|do(A_t=s)) &=  \frac{1}{4}\sum_{b_t, c_t} p(B_{t+1}|A_t=s, B_t=b_t, C_t=c_t)
\end{align}}}
Decomposing causal effects under this assumption essentially decomposes the causal power under maximal ignorance. To calculate the MACE we just need to calculate the average effect of $A_t$ on $B_{t+1}$, conditional on the possible states of $B_t$ and $C_t$, which can be immediately read off from the rule specification. Another option is to let the inputs always be zero, so that $p(B_t, C_t) = \delta_{B_t, 0}\delta_{C_t, 0}$, which gives the causal power in the context of the empty state. 

Alternatively, we could estimate the prior distribution of $B_t$ and $C_t$ based on data from a simulated automaton, which gives the causal power decomposition of the dynamics conditional on an initial state. We will consider two possible initialisations: a random initialisation, where each cell is drawn from a Bernoulli distribution with $p=0.5$, and a middle-1 initialisation, where all cells are zero except for the middle cell, which is set to one.

The studied rules and their causal decompositions are shown in Figure \ref{fig:CA_MACE_decomp}, for each of the priors. To get empirical estimates, we simulated automata with 100 cells and periodic boundary conditions that evolved over 10k steps, where the first 500 states were discarded. 

\begin{figure*}
    \centering
    \includegraphics[width=1\textwidth]{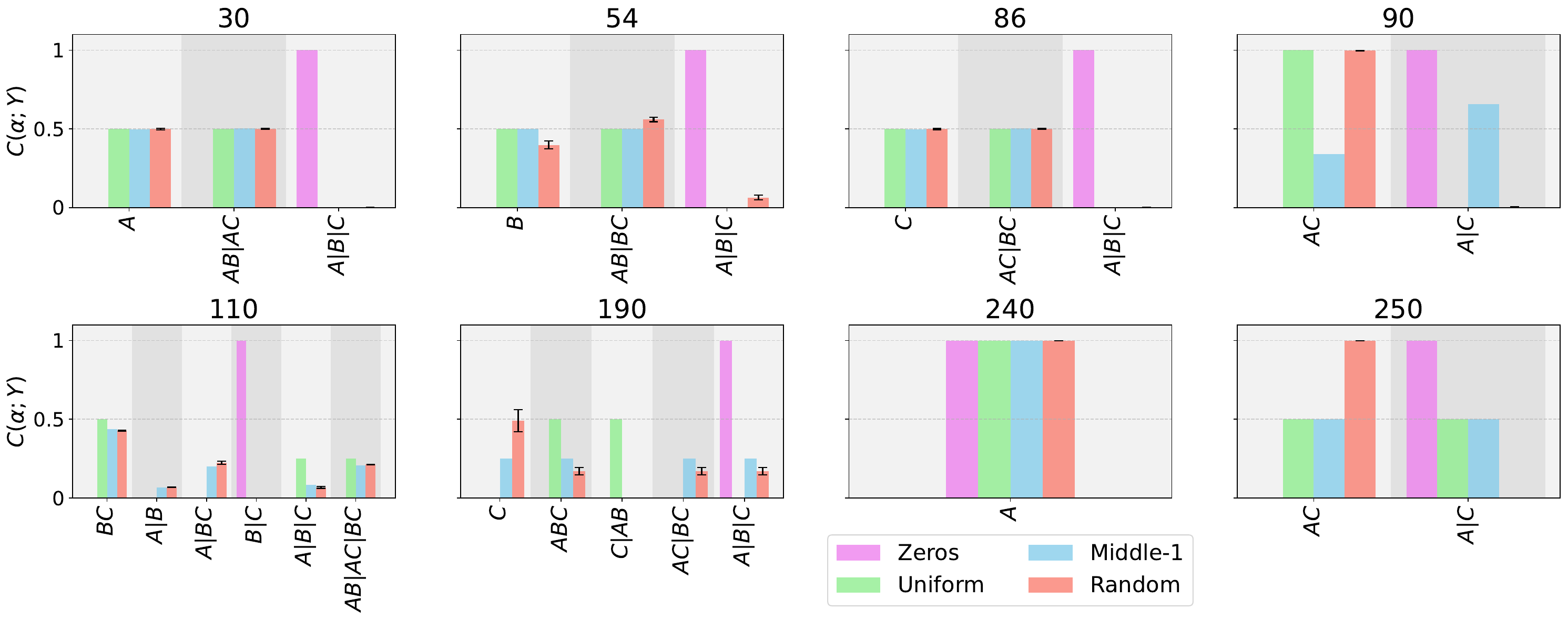}
    \caption{The causal decomposition of the cellular automata as a function of the probability of the inputs. Rule number indicated above each figure. Error bars indicate standard deviation across 20 random initialisations. Only antichains with a causal power of at least 0.01 in at least one context are shown.}
    \label{fig:CA_MACE_decomp}
\end{figure*}

The results give a nuanced and context dependent description of the causal power inside the automata. For example, if you want to control the `Rule 90' automata with a single intervention, then this is possible under a middle-1 initialisation, but not under a random initialisation. More details on the decomposition for each of the shown rules are available from Appendix \ref{app:automata}.

\subsection{Rate-dependency in the causal decomposition of a chemical network}
We are interested in seeing how the causal decomposition of a system can change as one varies a parameter. To illustrate this, consider the following chemical network. Two chemicals $X_1$ and $X_2$ can spontaneously form a molecule $Y$ at rates $k_1$ and $k_2$, but can also come together to form $Y$ at rate $k_3$. The molecule $Y$ then spontaneously degrades at rate $k_4$, which we set equal to 1.  
The concentration of $Y$ is described by the following rate equation and steady-state concentration:
\begin{align}
    \frac{d[Y]}{dt} &= k_1[X_1] + k_2[X_2] + k_3[X_1][X_2] - k_4[Y]\\
    [Y]_{ss}(X1, X2) &= k_1[X_1] + k_2[X_2] + k_3[X_1][X_2]
\end{align}
We imagine that the experimenter is able to modify the concentrations of $X_1$ and $X_2$ by adding an amount $\delta_i$ to the concentration of $X_i$, and that the target variable is the normalised concentration $\hat{Y} = \frac{Y}{Y_{ss}[X_1, X_2]}$. Note that the causal structure of this system is still just a collider $X_1 \rightarrow Y \leftarrow X_2$, so to calculate the effect of an intervention on one of the concentrations, we just need to calculate the conditional expectation of $Y$ given the intervention, which is the steady state concentration (assuming that the system equilibrates quickly).
\begin{align}
    E(\hat{Y}|do(\delta_1 = \epsilon)) &= \frac{[Y]_{ss}(X_1 + \epsilon, X_2)}{[Y]_{ss}(X_1, X_2)}
\end{align}
The MACE of an antichain of variables $\{X_1, X_2\}$ on $\hat{Y}$ is then again simply given by Equation \eqref{eq:redundantMACE}, where the inner maximisation is trivial because we assume that the intervention is either of size $\epsilon$ or 0. The different causal contributions are shown in Figure \ref{fig:chemical_network} as a function of the rate $k_3$. At low $k_3$, spontaneous synthesis of $Y$ dominates, and since $k_1 > k_2$ almost all causal power lies uniquely with $X_1$. At high $k_3$, the combinatorial synthesis dominates, which is reflected by the fact that the causal power lies mostly with synergistic control. At high $k_3$, the asymmetry in spontaneous synthesis of $Y$ is no longer relevant, so all non-synergistic causal power is left redundant. 
\begin{figure}
    \centering
    \includegraphics[width=0.5\textwidth]{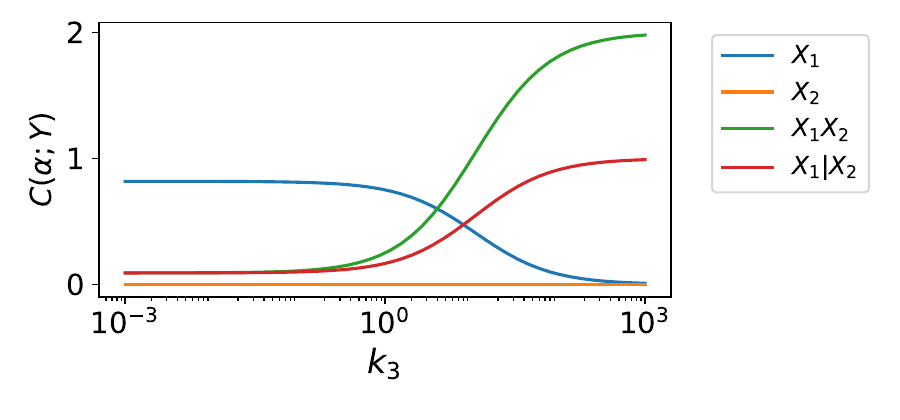}
    \caption{The causal power of perturbations in a chemical system. Parameters are set to $k_1 = 10$, $k_2 = 1$, $[X_1]=[X_2]=\epsilon=1$.}
    \label{fig:chemical_network}
\end{figure}

\subsection{Synergistic and redundant semantics in a transformer language model \label{sec:semantic_synergy}}
Finally, we demonstrate our causal decomposition in a more realistic scenario: we study the causal effect of string completions on sentiment analysis scores by the \texttt{distilbert-base-uncased-finetuned-sst-2-english} language model \cite{sanh2019distilbert}, as made available through the \texttt{Transformers} Python library \cite{wolf2020transformers}. Let the baseline sentence be \texttt{“this movie is”}. Let an intervention correspond to appending a word to the baseline sentence, and define the causal effect of appending string A to be:
\begin{align}
    \text{CE}(A;Y) = Y(\text{\texttt{"this movie is"}} + A) - Y(\text{\texttt{"this movie is"}}) \label{eq:CE}
\end{align}
where $Y$ is the model's (logit) positivity score, and addition represents string concatenation. Since the causal effect is now signed, we slightly modify the definition of the redundant causal effects to preserve the sign:
\begin{align}
    \text{CE}_\cap(\alpha;Y) = \begin{cases}
        \min_{A \in \alpha} \text{CE}(A;Y) \label{eq:redundantCE} & \text{if } \forall A \in \alpha: \text{CE}(A;Y) > 0\\
        \max_{A \in \alpha} \text{CE}(A;Y) & \text{if } \forall A \in \alpha: \text{CE}(A;Y) < 0\\
        0 & \text{otherwise}
    \end{cases}
\end{align}

That is, the redundant causal effect is the strongest signed effect that all elements from $\alpha$ can achieve. With this, we can decompose the effect of sentence completions into synergistic, redundant, and unique effects of words. We investigated examples of each of these three components: the completion \texttt{"horribly bad"} represents redundant semantics (both words generally signal negative sentiment), \texttt{"not bad"} represents synergistic semantics (the combined meaning is different from that of the individual words), and \texttt{"really bad"} represents unique semantic content (the sentiment is fully contained in the word \texttt{"bad"}). The results are summarised in Figure \ref{fig:LLM_decomp}. The decomposition matches with the above intuitions about synergistic, redundant, and unique semantic content, as the decomposition for \texttt{"horribly bad"} shows strong negative redundancy, \texttt{"not bad"} shows strong positive synergy that reflects the semantic negation (as well as a negative redundancy, because the model associated both \texttt{"not"} and \texttt{"bad"} with negative sentiment), and \texttt{"really bad"} shows strong unique negative sentiment for the word \texttt{"bad"}.

\begin{figure}
    \centering
    \includegraphics[width=0.9\textwidth]{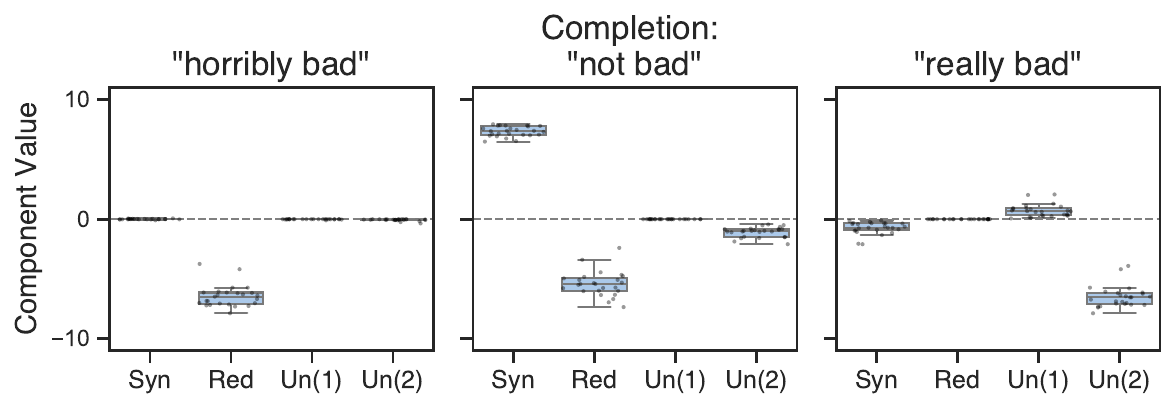}
    \caption{The causal decomposition of sentence completions in a language model for sentiment analysis reveals that semantic synergy, redundancy, and uniqueness are accurately captured. The box plots show the distribution of the causal components across 25 base sentences (see Appendix \ref{app:LLM_decomp}).}
    \label{fig:LLM_decomp}
\end{figure}

\section{Discussion}

We have presented a principled decomposition of causal effects into synergistic, redundant, and unique components. Decomposing causal effects in logic gates, cellular automata, chemical networks, and language models revealed the synergistic, redundant, and unique causal power of different variables. Furthermore, we illustrated how the causal decomposition can depend on the dynamical context of the system, or on the value of its parameters. These insights allow for a more nuanced understanding of the causal structure of a system, and can be used to guide interventions in a system to achieve more desirable outcomes.

The formalism is similar in spirit and algebra to that of the partial information decomposition, but by decomposing an interventional quantity we are able to disentangle synergistic and redundant causal power. The approach outlined here would in principle work for any quantity where a notion of synergy and redundancy can be defined, in particular for other definitions of causal power than the one in Equation \eqref{eq:MACE}. We encourage and anticipate further exploration of this approach, as our definition of the redundant MACE is deliberately simple to keep the presentation of the general formalism transparent. Two examples of extending the definition of redundant causality were already presented: one that preserves the sign of the effect of an intervention (Section \ref{sec:semantic_synergy}), and one that replaces the MACE by a counterfactual outcome (Appendix \ref{app:nec_and_suff}). Both were inspired by the MMI measure from the PID literature, but one can imagine deriving different measures of redundant causality from other PID functions, or even completely novel ones. Causal decompositions in other contexts, like in climate systems with feedback loops, may require a more sophisticated notion of redundant causality. \citet{janzing2013quantifying}, for example, define causal effects in terms of a Kullback-Leibler divergence, which can be similarly decomposed \citep{jansma2025mereological}. More generally, our approach fits into the wider context of deriving Möbius inverses of observable quantities to obtain a fine-grained description of complex systems \citep{jansma2025mereological}, which might be used to derive different decompositions of causality in the future.  

While the aim of this study is similar to that of \citet{martinez2024decomposing}, we have taken a very different approach. We believe that decomposing causality fundamentally requires an interventional quantity, so decomposing a quantity derived purely in terms of the joint probability distribution, as is done by \citet{martinez2024decomposing} for mutual information, cannot yield true causal insight. While \citet{martinez2024decomposing} consider it an advantage of their method that it does not scale with the Dedekind numbers, we believe that the superexponential growth of the number of antichains is a feature, not a bug. It is a reflection of the fact that the number of ways information can be carried among $n$ variables fundamentally grows superexponentially with $n$. \citet{martinez2024decomposing} only consider redundancies among individual variables, not considering possible redundancies between pairs. For example, they do not include redundancies of the form $\{\{X_1, X_2\}, \{X_3\}\}$, or $\{\{X_1, X_2\}, \{X_3, X_4\}\}$, both of which are significantly nonzero in the cellular automata studied here. In general, their approach decomposes the total mutual information into $2(2^n-1)+n$ terms. By not including the full set of redundancies, but still requiring that the sum of the terms adds up to the total, their method conflates multiple sources and does not disentangle the causal structure. While their results suggest that their method can be useful in understanding the structure of complex systems, we believe that it does not disentangle redundancy from synergy, and that they do not capture causal quantities.

\textbf{Limitations and future work} 
Our approach faces similar limitations to the partial information decomposition. Most pressingly, the number of ways in which causal relationships can be redundant and synergistic scales superexponentially with the number of variables. While this Dedekind scaling in theory limits our analysis to around five variables, we believe that in practice it does not strongly diminish the applicability of redundancy decompositions like the one introduced here. Our approach can yield very fast decompositions for up to five variables, the computational bottleneck being the calculation of the MACE, for which more efficient proxies might be found. While the fast Möbius transform from \cite{jansma2024fast} can be employed to calculate parts of the decomposition on even larger systems, decompositions among more than five variables quickly become hard to interpret and are therefore not likely to be of practical relevance (partial causal effects quickly become hard to interpret, since the causal power among $n$ variables can be distributed among up to $\binom{n}{\lfloor n/2 \rfloor}$ sets by Sperner's Theorem).

A further limitation shared with the PID is the ambiguity of the definition of redundancy. We chose to base our definition of redundant causality on the (MMI) measure \cite{barrett2015exploration}, which has been criticised for its simplicity \cite{bertschinger2012shared}. Still, these limitations have not hindered widespread application of the PID to real-world systems (\citet{luppi2022synergistic} and \citet{luppi2024synergistic} being two recent MMI-based applications of the PID to human brain data), so similarly should not hinder real-world applicability of our approach.

Another limitation is causal inference itself: our method can decompose causal effects, but does not provide an easy way to estimate them. It is well-known that accurately estimating causal effects is a hard problem, which can limit the applicability of our method. Still, the decomposition can be used to gain insights in a variety of real-world systems where we can either do interventions, or have models that can simulate them. In systems with complex control, like gene regulation or the climate, there might be a combination of both redundant causality (which could improve robustness) and synergistic causality (which could allow for more efficient or fine-grained control). Understanding how to intervene in living systems to achieve a desired outcome is also a major challenge in synthetic biology and medicine. Disentangling causal power in AI systems is another important application, as it can help understand, steer, and align their behaviour. \citet{lindsey2025biology} recently developed a method to intervene on the computational graph of a large language model, which could be a very natural setting to apply our method. In the social sciences, one could imagine that individuals can have both redundant and synergistic control over the behaviour of the group. How to attribute responsibility, blame, or rewards in such a social network is both a quantitative and a philosophical/ethical question. We hope that our approach can provide some insight into the former. 

\begin{ack} 
    The author thanks Joris Mooij, Philip Boeken, Patrick Forré, Rick Quax, and Daan Mulder for helpful comments and suggestions concerning causality, synergy, and an early draft of this manuscript, and acknowledges support from the Dutch Institute for Emergent Phenomena (DIEP) cluster via the Foundations and Applications of Emergence (FAEME) programme.
\end{ack}

\bibliography{refs}

\appendix

\section{proofs}\label{app:monotonicity}
\subsection{Proof of Lemma \ref{lem:monotonicity} }
We first prove the following (obvious) fact:
\begin{lemma}\label{lem:monotonicity_PX}
    The function $\text{\emph{MACE}}: \mathcal{P}(X) \to \mathbb{R}$ is monotonic on $(\mathcal{P}(X), \subseteq)$.
\end{lemma}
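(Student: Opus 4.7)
The plan is to establish MACE monotonicity by showing that the image set $\{\mathbb{E}[Y|do(X_S=x)] : x \in \mathcal{X}_S\}$ is contained in the interval spanned by $\{\mathbb{E}[Y|do(X_T=t)] : t \in \mathcal{X}_T\}$ whenever $S \subseteq T$. Since the MACE is precisely the diameter of such a set, this immediately yields $\text{MACE}(X_S;Y) \leq \text{MACE}(X_T;Y)$.

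The key technical step is a convex-combination identity: for each $x \in \mathcal{X}_S$,
\begin{equation*}
\mathbb{E}[Y|do(X_S=x)] = \sum_{u \in \mathcal{X}_{T\setminus S}} P(X_{T\setminus S}=u \mid do(X_S=x))\cdot \mathbb{E}[Y|do(X_T=(x,u))].
\end{equation*}
This expresses that intervening on a smaller set is equivalent to mixing joint interventions on the larger set according to the natural distribution of the remaining variables under the smaller intervention. It follows from the truncated factorisation of the interventional distribution together with the consistency principle of the do-calculus: the distribution of $Y$ conditional on $X_{T\setminus S}=u$ under $do(X_S=x)$ should coincide with the distribution of $Y$ under the joint intervention $do(X_T=(x,u))$.

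Given this identity, the rest of the argument is immediate. Writing $f(x,u) := \mathbb{E}[Y|do(X_T=(x,u))]$ and $g(x) := \mathbb{E}[Y|do(X_S=x)]$, each $g(x)$ lies in $[\min_u f(x,u), \max_u f(x,u)]$. Letting $(a,b) \in \mathcal{X}_S^2$ attain the maximum in $\text{MACE}(X_S;Y)$,
\begin{equation*}
\text{MACE}(X_S;Y) = g(a) - g(b) \leq \max_u f(a,u) - \min_{u'} f(b, u') \leq \text{MACE}(X_T;Y),
\end{equation*}
where the final inequality holds because the maximising/minimising extensions form an admissible pair for the MACE on the larger set.

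The main obstacle I anticipate is justifying the convex-combination identity cleanly. In models with unobserved confounders between $X_{T\setminus S}$ and $Y$, observing $X_{T\setminus S}=u$ under $do(X_S=x)$ can carry information beyond what the joint intervention $do(X_T=(x,u))$ provides, and the identity (and indeed the lemma as literally stated) can fail. I therefore expect the intended proof to implicitly rely on causal sufficiency over $X \cup \{Y\}$, consistent with the Markovian structure of the paper's worked examples; under that assumption the identity is a direct consequence of the truncated factorisation, and the remainder of the proof is purely algebraic.
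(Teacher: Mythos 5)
Your proof is correct (under the assumption you flag) but takes a genuinely different --- and more careful --- route than the paper's. The paper's own proof is a two-line argument that picks ``the element of $\alpha$ that achieves the maximum'' and observes that it also lies in $\beta$; in effect it treats the optimal intervention on the smaller set $X_S$ as if it were literally one of the interventions available on the larger set $X_T$. That is precisely the step that needs justification, since $do(X_S=x)$ is not of the form $do(X_T=t)$ for any $t$. Your convex-combination identity $\mathbb{E}[Y|do(X_S=x)] = \sum_u P(X_{T\setminus S}=u\mid do(X_S=x))\,\mathbb{E}[Y|do(X_T=(x,u))]$ supplies exactly the missing link, and the diameter argument that follows is clean. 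Your caveat about unobserved confounders is also substantive rather than pedantic: if, say, $U$ is a hidden fair coin with $X_2:=U$ and $Y:=\mathds{1}\{X_2=U\}\oplus X_1$, then $\mathbb{E}[Y|do(X_1=x)]=1-x$ so $\text{MACE}(X_1;Y)=1$, while $\mathbb{E}[Y|do(X_1=x,X_2=u)]=\tfrac{1}{2}$ for all $(x,u)$ so $\text{MACE}(\{X_1,X_2\};Y)=0$, violating the lemma as literally stated. So the lemma implicitly assumes causal sufficiency (or at least that rule~2 of the do-calculus licenses replacing conditioning on $X_{T\setminus S}$ by intervention), an assumption that holds in all of the paper's worked examples but is nowhere stated. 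In short: your proof buys rigour and an explicit statement of the hypothesis under which the result is actually true; the paper's buys brevity at the cost of glossing over both.
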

\begin{proof}
    Assume that $\alpha \subseteq \beta$. Let $a^*$ be the element of $\alpha$ that achieves the maximum, such that $\text{MACE}_\cap(\alpha;Y) = \text{MACE}(a^*;Y)$. Then, since $\alpha \subseteq \beta$, we know that $a^* \in \beta$, so $\text{MACE}_\cap(\beta;Y) \geq \text{MACE}(a^*;Y) = \text{MACE}_\cap(\alpha;Y)$. Therefore, $\alpha \subseteq \beta \implies \text{MACE}_\cap(\alpha;Y) \leq \text{MACE}_\cap(\beta;Y)$.
\end{proof}

We can now prove Lemma \ref{lem:monotonicity}:

\textbf{Lemma \ref{lem:monotonicity}.} \textit{The function $\text{\emph{MACE}}_\cap: \mathcal{A}_n \to \mathbb{R}$ is monotonic with respect to the redundancy ordering on $\mathcal{A}_n$.}
\begin{proof}
    Recall that $\alpha \leq \beta \iff \forall b \in \beta, \exists a \in \alpha: a \subseteq b$. Now assume that given $\alpha \leq \beta$, we have 
    \begin{align}
        \text{MACE}_\cap(\beta;Y) &< \text{MACE}_\cap(\alpha;Y)\\
        \implies \min_{b \in \beta} \text{MACE}(b;Y) &< \min_{a \in \alpha} \text{MACE}(a;Y) \label{eq:mace_ineq}
    \end{align}
    By Lemma \ref{lem:monotonicity_PX} Equation \eqref{eq:mace_ineq} can only be true as long as $\nexists b \in \beta, a \in \alpha: a \subseteq b$. However, since $\alpha \leq \beta$, we know that $\forall b \in \beta, \exists a \in \alpha: a \subseteq b$, which is a contradiction. Therefore, $\alpha \leq \beta \implies \text{MACE}_\cap(\alpha;Y) \leq \text{MACE}_\cap(\beta;Y)$.
\end{proof}

\section{Causal decomposition of cellular automata}\label{app:automata}

\begin{figure}
    \centering
    \includegraphics[trim={0.4cm 0.1cm 0.4cm 0.1cm},clip,width=0.23\textwidth]{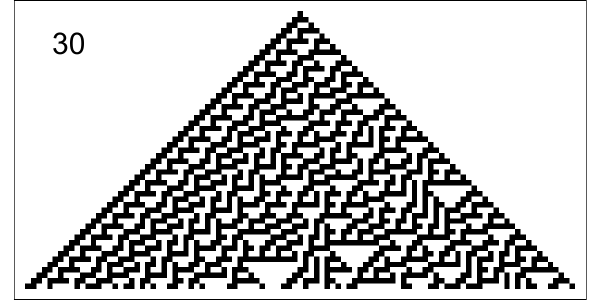}
    \includegraphics[trim={0.4cm 0.1cm 0.4cm 0.1cm},clip,width=0.23\textwidth]{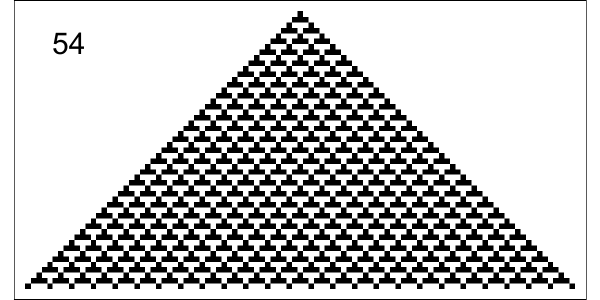}
    \includegraphics[trim={0.4cm 0.1cm 0.4cm 0.1cm},clip,width=0.23\textwidth]{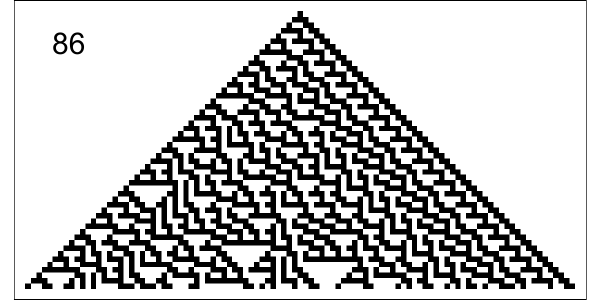}
    \includegraphics[trim={0.4cm 0.1cm 0.4cm 0.1cm},clip,width=0.23\textwidth]{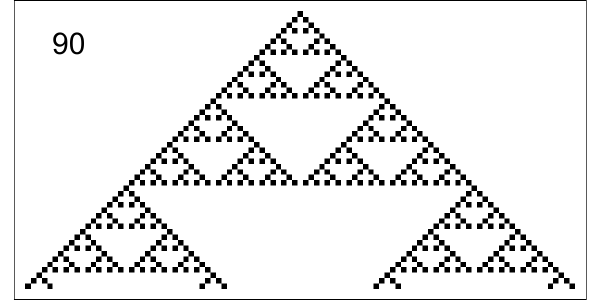}
    \includegraphics[trim={0.4cm 0.1cm 0.4cm 0.1cm},clip,width=0.23\textwidth]{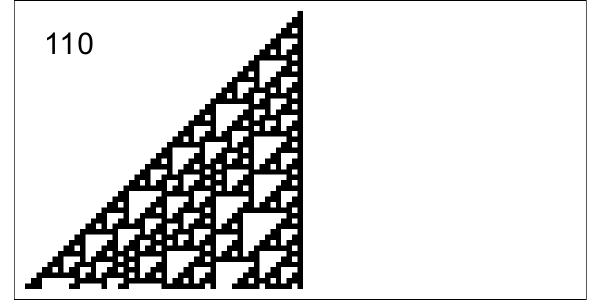}
    \includegraphics[trim={0.4cm 0.1cm 0.4cm 0.1cm},clip,width=0.23\textwidth]{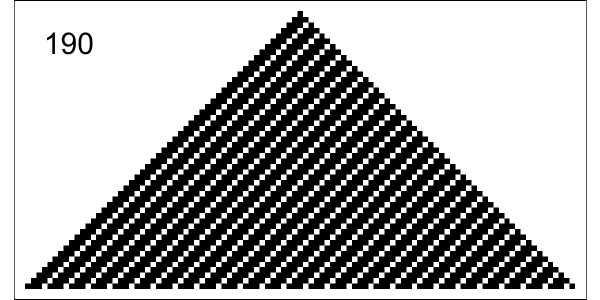}
    \includegraphics[trim={0.4cm 0.1cm 0.4cm 0.1cm},clip,width=0.23\textwidth]{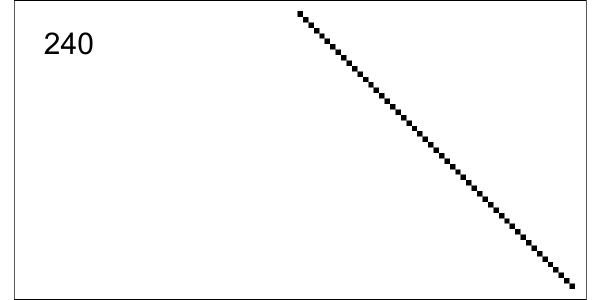}
    \includegraphics[trim={0.4cm 0.1cm 0.4cm 0.1cm},clip,width=0.23\textwidth]{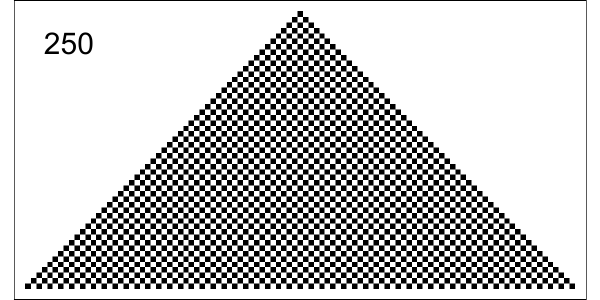}
    \caption{The various cellular automata rules studied, initialised with a single 1 in the middle of a 100-cell array and evolved for 50 steps.}
    \label{fig:CA_rules}
\end{figure}

The first few steps of a middle-1 initialisation of the automata rules are shown in Figure \ref{fig:CA_rules}. We here provide further details on how the causal decompositions in Figure \ref{fig:CA_MACE_decomp} relate to these automata rules. 

Rule 30 can be written as $B_{t+1} = A_t \text{ XOR } (B_t \text{ OR } C_t)$, which shows that $A$ indeed has some unique causal power, but that there is also synergistic causality between $A$ and $C$. However, in a context of only zeros, the XOR reduces to a simple OR gate, which makes the causal power fully redundant. 

Rule 54 corresponds to $B_{t+1} = (A_t \text{ OR } C_t) \text{ XOR } B_t$ so can be fully redundant in the context of zeros, but contains synergistic causality with $B$ in all other situations. Note, however, that there is redundancy among the two possible pairwise synergies with $B$. 

Rule 86 is the mirrored version of rule 30, which is reflected by the equivariance of their decompositions under $A\leftrightarrow C$. 

Rule 90 can be written as $B_{t+1} = A_t \text{ XOR } C_t$, which the causal decomposition correctly reveals as either purely synergistic or redundant causal power among $A$ and $C$, depending on the context. Only with middle-1 initialisation does the causal power become mixed, which is the initialisation that makes rule 90 evolve a fractal Sierpiński triangle.

Rule 110, famously complex and Turing complete, contains the most complex causal structure of the rules studied here, though this is not visible from the causal decomposition in the context of zeros. 

Rule 190 can be written as $B_{t+1} = (A_t \text{ XOR } B_t) \text{ OR } C_t$, which is why the causal power is always fully redundant in the context of zeros, but shows both pure synergy as well as redundancy between $AB$ and $C$ in the uniform background. Under different initialisation, however, the causal decomposition become more complex. 

Rule 240 is the exceptionally simple $B_{t+1} = A_t$, which is why every prior results in all causal power lying with $A$. 

Rule 250 is $B_{t+1} = A_t \text{ OR } C_t$, and so, similar to the OR gate from Fig. \ref{fig:decomp} at $p=0.5$, contains equal parts synergistic and redundant causal power, except in the context of zeros, or under random initialisation (which under this rule is equivalent to a context of only ones).

\section{Necessary and sufficient causes \label{app:nec_and_suff}}
To illustrate the flexibility of the framework for other causal quantities, consider decomposing the actual value of an outcome $Y$, which is the value that $Y$ takes when the input variables take their actually observed (binary) values $X_S = x_S$. We extend the definition of the outcome to antichains as:
\begin{align}
    Y_\cap(\alpha) &= \min_{A \in \alpha} Y(do(X_A = 1), X_{S \setminus A} = x_{S \setminus A}) \label{eq:actual_outcome_cap}
\end{align}
This now describes a counterfactual quantity, in contrast to the MACE, which puts it on the same `rung' as the usual notions of necessary and sufficient causes. We follow \citet{halpern2005causes}, and define a cause to be a conjunction of primitive events. A Möbius inversion of $Y_\cap$ can identify necessary and sufficient causes as follows. We set $\forall i \in S: x_i = 0$ for simplicity, and calculate the inversion as
\begin{align}
    D(\beta;Y) = \sum_{\alpha \leq \beta} \mu_{\mathcal{A}(X_S)}(\alpha, \beta) Y_\cap(\alpha) 
\end{align}
Similar as before, this yields a nonnegative decomposition when $Y_\cap$ is monotone on $(\mathcal{P}(X_S), \subseteq)$. When $D(\beta;Y)=1$ then the conjunctions $\{\bigwedge_{b \in B}b \mid B \in \beta\}$ are the sufficient causes of $Y=1$. A necessary cause is a cause that appears in every sufficient cause. 

For example, when $Y(X_s)=\bigvee_{i \in S} X_i$ (cf. the disjunctive forest fire model from \citet{halpern2016actual}), then $D(\beta;Y)=1$ if and only if $\beta = \{\{X_i\} \mid i \in S\}$, indicating that activation of either of the variables is a sufficient cause of $Y$, but there are no necessary causes. 

In contrast, when $Y(X_S)=\bigwedge_{i \in S} X_i$ (cf. the conjunctive forest fire model from \citet{halpern2016actual}), then $D(\beta;Y)=1$ if and only if $\beta = \{X_S\}$, indicating that the conjunction $\bigwedge_{i \in S} X_i=1$ is both a necessary and sufficient cause of $Y=1$. However, in a context where $x_i=1$, $D(\beta;Y)=1$ if and only if $\beta = \{X_{S\setminus \{i\}}\}$, indicating that a smaller conjunction is a sufficient cause. 

When $Y(X_S)=1$ iff $\sum_i X_i\geq 2$, then $D(\beta;Y)=1$ if and only if $\beta = \{X_T \mid T\subseteq S,~ |T|=2\}$, so conjunctions of two variables are sufficient causes of $Y=1$. 

As already noted by \citet{halpern2005causes}, one might also allow causes to be formed from disjunctions. The set of all causes can then be ordered by logical implication, yielding the free distributive lattice generated by $X_S$. This happens to be isomorphic to the redundancy ordering on the antichains \cite{jansma2024fast}, so under this definition of causes, the decomposition simply assigns a value to every possible cause.

\section{Base sentences for sentiment analysis}\label{app:LLM_decomp}
The 25 base sentences prepended to the string completions in the sentiment analysis decomposition are:

\texttt{"this movie is"}, \texttt{"this book is"}, \texttt{"I found this movie to be"}, \texttt{"film rating:"}, \texttt{"my opinion of the film is that it's"}, \texttt{"the acting was"}, \texttt{"the plot felt"}, \texttt{"overall, I thought the picture was"}, \texttt{"the story is"}, \texttt{"this film is considered to be"}, \texttt{"I would describe this movie as"}, \texttt{"the director's work is"}, \texttt{"the script is"}, \texttt{"the new series is"}, \texttt{"the novel is"}, \texttt{"what I saw was"}, \texttt{"the performance was"}, \texttt{"this show is"}, \texttt{"the reviewer said it was"}, \texttt{"my impression was that the film is"}, \texttt{"the hotel room was"}, \texttt{"my experience at the museum was"}, \texttt{"I found the service"}, \texttt{"Overall experience:"}, \texttt{"In conclusion, the event was"}

\end{document}